\definecolor{my_red}{HTML}{e63946}
\definecolor{my_beige}{HTML}{f1faee}
\definecolor{my_lightblue}{HTML}{a8dadc}
\definecolor{my_blue}{HTML}{457b9d}
\definecolor{my_darkblue}{HTML}{1d3557}
\definecolor{my_orange}{HTML}{f08700}
\newtheorem{lemma}{Lemma}
\newtheorem{theorem}{Theorem}
\newtheorem{definition}{Definition}
\newcommand{\norm}[1]{\left\lVert#1\right\rVert}
\newcommand{\E}[0]{\mathbb{E}}
\title{Bias-Aware Minimisation: Understanding and\\ Mitigating Estimator Bias in Private SGD}
\author {
    % Authors
    Moritz Knolle\textsuperscript{\rm1,2},
    Robert Dorfman\textsuperscript{\rm3},
    Alexander Ziller\textsuperscript{\rm1},
    Daniel Rueckert \textsuperscript{\rm1,2,4} and
    Georgios Kaissis\textsuperscript{\rm1,2,4} \\
    \textsuperscript{\rm 1} \textit{Institute for AI in Medicine, Technical University of Munich}\\
    \textsuperscript{\rm 2} \textit{Konrad Zuse School for Excelllence in Reliable AI}\\
    \textsuperscript{\rm 3} \textit{V7 Labs}\\
    \textsuperscript{\rm 4} \textit{Imperial College London}\\
    }
\begin{document}

\maketitle

\begin{abstract}
Differentially private SGD (DP-SGD) holds the promise of enabling the safe and responsible application of machine learning to sensitive datasets. However, DP-SGD only provides a biased, noisy estimate of a mini-batch gradient. This renders optimisation steps less effective and limits model utility as a result. With this work, we show a connection between per-sample gradient norms and the estimation bias of the private gradient oracle used in DP-SGD. Here, we propose Bias-Aware Minimisation (BAM) that allows for the provable reduction of private gradient estimator bias. We show how to efficiently compute quantities needed for BAM to scale to large neural networks and highlight similarities to closely related methods such as Sharpness-Aware Minimisation. Finally, we provide empirical evidence that BAM not only reduces bias but also substantially improves privacy-utility trade-offs on the CIFAR-10, CIFAR-100, and ImageNet-32 datasets.
\end{abstract}

\section{Introduction}
The application of machine learning models to datasets containing sensitive information necessitates meaningful privacy guarantees, as it has been shown that the public release of unsecured models trained on sensitive data can pose serious threats to data-contributing individuals \cite{geiping2020inverting, carlini2021extracting, carlini2019secret}.
To this end, \textit{differential privacy} (DP) \cite{dwork2014algorithmic}, the widely accepted \textit{gold standard} approach to privacy-preserving data analysis, has been extended to many machine learning methods. Most significantly, for modern machine learning, differentially private stochastic gradient descent (DP-SGD) \cite{shokri2015privacy, abadi2016deep} has enabled the application of powerful deep neural networks to sensitive datasets with meaningful privacy guarantees.

In practice, however, DP-SGD comes with a substantial utility penalty. As we demonstrate below, this is because DP-SGD only provides a noisy, biased gradient estimate, causing optimization to settle in regions of the loss landscape that yield poorly performing models.
We show that (ignoring sampling noise), the variance of a private gradient estimate is fixed while its bias is not.
In this work, we attempt to improve private gradient estimators by minimising their bias. 
The bias of an estimate $\hat{p}$ of the true value $p$ underlying a process generating observations $x$ is defined as:
\begin{equation}
    \mathrm{Bias}(\hat{p}, p) =
    \E_{x}[\hat{p}]-p.
    \label{eq:bias}
\end{equation}

\begin{figure}[t]
\centering
    \begin{minipage}[t]{0.3\linewidth}
        \centering
        \usetikzlibrary{angles,quotes}

\tikzset{
dot/.style = {circle, fill, minimum size=#1,
              inner sep=0pt, outer sep=0pt},
dot/.default = 6pt % size of the circle diameter 
}

\begin{tikzpicture}

    %g_1
    \draw[line width=4pt, my_darkblue,-stealth](0,0)--(2.45, 4) node[anchor=west]{$\boldsymbol{g_1}$};
    \draw[line width=4pt, my_red,-stealth](0,0)--(1.04, 1.7) node[anchor= west]{$\text{clip}(\textcolor{black}{\boldsymbol{g_1}})$};
    %g_2
    \draw[line width=4pt, my_red, -stealth](0,0)--(-1.95,0.485) node[anchor= east]{$\text{clip}(\textcolor{black}{\boldsymbol{g_2}})$};
    
    % averaged batch gradients
    \draw[line width=4pt, my_blue,-stealth](0,0)--(0.4, 2.25) node[anchor=west, yshift=1mm, xshift=-1mm]{$\boldsymbol{\hat{g}}$};
    \draw[line width=4pt, my_lightblue,-stealth](0,0)--(-0.7, 1.1) node[anchor=east, yshift=-2mm, xshift=1mm]{$\boldsymbol{\hat{g}_{\text{clip}}}$};
    
     % Bias(\hat{g}_priv, \hat{g})
    %\draw[line width=4pt, my_orange,-stealth](0.4, 2.25)--(-0.7, 1.1) node[anchor=south]{};
    
    % curly bracket with tilted label
    %\node[midway, xshift=-4mm,yshift=20mm]{\rotatebox{45}{\textcolor{my_orange}{$\boldsymbol{\text{Bias}(\hat{g}, \hat{g}_\text{priv})}$}}};
 
    % half circle representing L2-clip threshold
    \begin{scope}
        \clip (-2.5,0) rectangle (3.0, 3.0);
        \draw [line width=4pt, my_red, -stealth, draw opacity=0.1] (0,0) circle(2.0);
    \end{scope}

    % center dot
    \node[dot=6pt] at (0,0) {};

    \node[text width=1cm] at (0.3,-0.5) 
    {(a)};
\end{tikzpicture}
    \end{minipage}
    \hspace{30mm}
    \begin{minipage}[t]{0.3\linewidth}
        \centering
        \usetikzlibrary{angles,quotes}
\usetikzlibrary{arrows.meta,decorations.pathreplacing}

\tikzset{
dot/.style = {circle, fill, minimum size=#1,
              inner sep=0pt, outer sep=0pt},
dot/.default = 6pt % size of the circle diameter 
}

\begin{tikzpicture}

    % averaged batch gradients
    \draw[line width=4pt, my_lightblue, -stealth, opacity=1.0](0,0)--(-0.7, 1.1) node[anchor=north east]{};
    \draw[line width=4pt, my_blue,-stealth, opacity=1.0](0,0)--(0.4, 2.25) node[anchor= north west]{};

    % Bias(\hat{g}_priv, \hat{g})
    \draw[line width=4pt, my_red,-stealth](0.4, 2.25)--(-0.7, 1.1) node[anchor=south]{};
    
    % curly bracket with tilted label
    \node[midway, xshift=-4mm,yshift=20mm]{\rotatebox{46}{\textcolor{my_red}{$\boldsymbol{\text{Bias}(\hat{g}_\text{priv}, \hat{g},)}$}}};
    
    % center dot
    \node[dot=6pt, opacity=1] at (0,0) {};

    %label
    \node[text width=1cm] at (0.35,-0.5) 
    {(b)};
\end{tikzpicture}
    \end{minipage}
    \caption{\textbf{(a)} Clipping per-sample gradients $g_1$ and $g_2$ yields a biased gradient estimate \textcolor{my_lightblue}{$\boldsymbol{\hat{g}_{\mathrm{clip}}}$} of the true mini-batch gradient \textcolor{my_darkblue}{$\boldsymbol{\hat{g}}$} in expectation. Below we show this bias, represented by the vector \textcolor{my_red}{$\boldsymbol{\text{Bias}(\hat{g}_\text{priv}, \hat{g},)}$} in \textbf{(b)}, to depend on $\lVert g_1\rVert_2$ and $\lVert g_2\rVert_2$.
    }
    \label{fig:clip_bias}
\end{figure}
Thus, when an estimator has zero bias, it is called \textit{unbiased} and its expected value is equal to the quantity being estimated. 
When estimating gradients, unbiasedness is a useful property as it implies that gradient estimates --in expectation-- point towards the correct descent direction.

Bias in DP-SGD results from the per-sample gradient clipping operation, used to enforce a sensitivity bound on the gradients (illustrated above in Fig. \ref{fig:clip_bias}).
However, despite its importance, the nature of this bias and how it affects parameter updates has remained poorly understood. With this work, we show that private gradient estimator bias is intimately related to the per-sample gradient norms and can further be decomposed into a \textit{magnitude} and \textit{directional} component (See Appendix \ref{sec:decomp}). 
We develop \textbf{B}ias-\textbf{A}ware \textbf{M}inimisation (\textbf{BAM}) and provide empirical evidence that our method effectively (I) reduces bias magnitude and (II) increases classification performance on a range of challenging vision benchmarks.

\section{Related Work}
The bias-variance tradeoff of the private gradient oracle was first discussed in \cite{mcmahan2018learning}.
More generally, bias associated with gradient clipping in SGD has been studied by \citet{zhang2019gradient, qian2021understanding} and also in the private setting, where \citet{song2020characterizing} demonstrated the importance of the correct choice of clipping threshold in convex models, while \citet{chen2020understanding} showed that no bias is incurred when a symmetricity condition on the gradient distribution holds.

Concurrent work \cite{park2023DPSharpness, shi2023Flatter} has provided empirical evidence that Sharpness-Aware-Minisation (SAM) \cite{foret2020sharpness} can improve privacy-utility tradeoffs. 
\citet{park2023DPSharpness} suggest that the magnitude of the bias vector (referred to as the \textit{effect of clipping} in their work) is bounded by the sharpness. 
With this work, we provide evidence that sharpness, while closely related, might \textit{not be the root cause determining private gradient bias} but rather the per-sample gradient norms in a mini-batch.
 
\section{Preliminaries}

\textbf{Differential privacy} \cite{dpbook} (DP), is a stability notion on randomised mechanisms over sensitive databases.
Let $D$ and $D'$ be two databases (datasets) that differ exactly in one individual's data.
We denote this relationship (i.e. database adjacency) through the symbol $\simeq$ and use the standard remove/add one relation throughout.
\begin{definition}[Differential privacy]
A randomised mechanism $M$ executed on the result of a query function $q$ (i.e. $M(q(\cdot))$) preserves $(\varepsilon$, $\delta)$-DP if, for all pairs of adjacent databases $D$ and $D'$, and all subsets $\mathcal{S}$ of $M(q(\cdot))$'s range:
    \begin{equation}
        p\left[M\left(q(D)\right) \in \mathcal{S}\right] \leq \mathlarger{e^{\varepsilon}} \, p\left[M(q(D')) \in \mathcal{S}\right] + \delta,
    \end{equation}
where the relationship between $D$ and $D'$ is symmetric.
The guarantee is given over the randomness of $M$. 
\end{definition}
To achieve $(\varepsilon, \delta)$-DP using Gaussian noise perturbation (i.e. the Gaussian mechanism $\mathcal{M}$), one needs to calibrate the magnitude of the noise to the query function's (global) sensitivity:
\begin{equation*}
    \Delta_2(q) =  \sup_{D  \simeq D'}\norm{q(D)-q(D')}_2,
\end{equation*}
or $\Delta$ for short.
Since obtaining a (non-vacuous) sensitivity bound is generally not possible for deep neural network gradients, it is common practice to enforce a bound manually by projecting per-sample gradients to the $L_2$-ball \cite{abadi2016deep}.
Together, this operation, more commonly known as \textit{clipping}, followed by appropriately scaled Gaussian perturbation, then yields the DP-SGD algorithm, which can be thought of as simply querying a private gradient oracle at each optimisation step as seen in Algorithm \ref{alg:oracle}: 

\begin{algorithm}[h]
\caption{Private gradient oracle $\psi$}
\textbf{Input:} (Poisson-sampled) mini-batch $B = \{(x_1, y_1), ..., (x_{l}, y_{l}) \}$, Clipping Bound $C$, noise multiplier $\sigma$, loss function $\mathcal{L}$, parameters $\theta \in \mathbb{R}^d$\\
\textbf{Output}; private gradient estimate

\begin{algorithmic}[1]
    \For{$(x_i, y_i)$ in $B$}
        \State $g_i \gets \nabla_\theta \mathcal{L}\left(f_\theta(x_i), y_i\right)$
        \State $\bar{g}_i \gets g_i / \max(1, \frac{\norm{g_i}_2}{C})$  \Comment{clip}
    \EndFor
    \State $\hat{g}_{\mathrm{priv}} \gets \frac{1}{l} \left( \sum_{i=1}^{l} \bar{g}_i + \mathcal{N}(0, \sigma^2 C^2 I_d) \right)$ \Comment{perturb}
    \State \textbf{return} $\hat{g}_{\mathrm{priv}}$
\end{algorithmic}
\label{alg:oracle}
\end{algorithm}

Where $l$ is the expected batch size due to the sub-sampling amplification requirement that $B$ is constructed through a Poisson sample of the Dataset $D$.

\subsection{Setting}
We focus on supervised learning where, given a dataset $D = \{(x_i, y_i), ..., (x_n, y_n)\}$ drawn i.i.d from a product distribution $\mathcal{X} \times \mathcal{Y}$, we wish to find a mapping $f: \mathcal{X} \rightarrow \mathcal{Y}$, realised through a neural network.
This neural network has parameters $\theta \in \Theta \in \mathbb{R}^d$ and is trained by minimising the empirical loss function $\mathcal{L}(\theta) := \frac{1}{n}\sum_{i = 1}^n\mathcal{L}(\theta, x_i, y_i)$ using DP-SGD, whereby at each optimisation step the private gradient oracle is queried to obtain a privatised gradient estimate.

\begin{align}
    \theta^{(t+1)} = \theta^{(t)} - \gamma^{(t)} \hat{g}^{(t)}_{\mathrm{priv}}\\
     \hat{g}^{(t)}_{\mathrm{priv}} =  \psi(B, C, \sigma, \mathcal{L}, \theta^t)
\end{align}

We drop the superscript $t$ for notational simplicity, assuming we are on step $t$ in the following analyses. We further denote the clipped minibatch gradient as $\hat{g}_{\mathrm{clip}}= 1/l \sum_i^l \bar{g}_i$.

\section{Clipping dominates private gradient bias}

\label{section:clipping_bad}
Since we are interested in obtaining an unbiased version of DP-SGD, we first study the bias introduced by constructing a private estimate for the mini-batch gradient $\hat{g}$. Our analysis focuses on the quantity:
\begin{equation*}
    \text{Bias}\left(\hat{g}_{\mathrm{priv}}, \, \hat{g}\right) = \E[\hat{g}_{\mathrm{priv}}] - \hat{g},
\end{equation*}
where we view $\hat{g}$ as fixed, that is, the gradient constructed for an already observed batch of data to be used in a step of conventional SGD.
This enables us to isolate the bias introduced through private estimation of $\hat{g}$. 
Note that when using $\E$ without a subscript, we take the expectation over all randomness present.
In the case of $\hat{g}_{\mathrm{priv}}$, this randomness is --by assumption-- only due to the Gaussian mechanism.
We first observe that this gradient perturbation does not introduce additional bias:

\begin{lemma}
\label{noise lemma}
The bias of the private gradient estimate, Bias($\hat{g}_{\mathrm{priv}}, \hat{g})$, is unaffected by the noise addition in the Gaussian mechanism. That is,
$$\mathrm{Bias}(\hat{g}_{\mathrm{priv}}, \hat{g}) = \mathrm{Bias}(\hat{g}_{\mathrm{clip}}, \hat{g})$$
\end{lemma}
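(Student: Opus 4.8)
The plan is to exploit the fact that $\hat{g}_{\mathrm{priv}}$ differs from $\hat{g}_{\mathrm{clip}}$ only by an additive, zero-mean Gaussian term, together with the modelling assumption stated just before the lemma that the expectation is taken \emph{solely} over the randomness of the Gaussian mechanism, while the mini-batch (and hence both $\hat{g}_{\mathrm{clip}}$ and the target $\hat{g}$) is held fixed.

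First I would rewrite the oracle's output from Algorithm \ref{alg:oracle} by separating the deterministic clipped sum from the injected noise:
\[
\hat{g}_{\mathrm{priv}} = \frac{1}{l}\sum_{i=1}^{l}\bar{g}_i + \frac{1}{l}\,\mathcal{N}(0, \sigma^2 C^2 I_d) = \hat{g}_{\mathrm{clip}} + \frac{1}{l}\,\xi,
\]
where I write $\xi \sim \mathcal{N}(0, \sigma^2 C^2 I_d)$ for the noise and use the definition $\hat{g}_{\mathrm{clip}} = \frac{1}{l}\sum_i^l \bar{g}_i$ introduced above.

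Next I would apply the definition of bias in \eqref{eq:bias} and linearity of expectation. Because the batch is already observed, $\hat{g}_{\mathrm{clip}}$ is a constant vector with respect to the only source of randomness present, so $\E[\hat{g}_{\mathrm{clip}}] = \hat{g}_{\mathrm{clip}}$, while the scaled Gaussian term vanishes in expectation since $\E[\xi] = 0$. This yields
\[
\E[\hat{g}_{\mathrm{priv}}] = \hat{g}_{\mathrm{clip}} + \frac{1}{l}\,\E[\xi] = \hat{g}_{\mathrm{clip}}.
\]
Subtracting the fixed target $\hat{g}$ from both sides then gives $\mathrm{Bias}(\hat{g}_{\mathrm{priv}}, \hat{g}) = \hat{g}_{\mathrm{clip}} - \hat{g} = \mathrm{Bias}(\hat{g}_{\mathrm{clip}}, \hat{g})$, which is the claim.

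I do not expect any serious analytical obstacle, since the result is essentially a one-line consequence of linearity of expectation. The only point that requires care is conceptual rather than computational: one must be explicit that the expectation ranges over the Gaussian mechanism alone, so that $\hat{g}_{\mathrm{clip}}$ is treated as deterministic. This is precisely the assumption recorded before the statement, so once it is made precise the argument closes immediately. The importance of the lemma lies in what it licenses downstream: it attributes \emph{all} of the estimator's bias to clipping, thereby justifying the subsequent analysis that focuses exclusively on $\hat{g}_{\mathrm{clip}}$.
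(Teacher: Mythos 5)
Your proof is correct and follows essentially the same route as the paper's: linearity of expectation, the zero mean of the Gaussian perturbation, and the convention that $\hat{g}_{\mathrm{clip}}$ and $\hat{g}$ are deterministic given the observed mini-batch. If anything, your decomposition $\hat{g}_{\mathrm{priv}} = \hat{g}_{\mathrm{clip}} + \tfrac{1}{l}\xi$ with a single noise draw is slightly more faithful to Algorithm \ref{alg:oracle} than the paper's own display, which writes the noise inside the per-sample sum; the conclusion is identical either way.
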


\begin{proof}
\begin{align}
    \mathrm{Bias}(\hat{g}_{\mathrm{priv}},& \hat{g})= \E\Big[\frac{1}{l}\sum_{i=1}^l\Big(\mathrm{clip}(g_i) + \mathcal{N}(0, \sigma^2C^2I_d)\Big)\Big] - \hat{g}  \\
    =& \frac{1}{l}\sum_{i=1}^l \Big(\mathrm{clip}(g_i) + \E\big[\mathcal{N}(0, \sigma^2 C^2I_d)\big]\Big) - \hat{g} \\
    =& \hat{g}_{\mathrm{clip}} - \hat{g} \\
    =& \mathrm{Bias}(\hat{g}_{\mathrm{clip}}, \hat{g})
    \label{bias g_priv}
\end{align}

\noindent This holds since $\hat{g}$ and $\hat{g}_{\mathrm{clip}}$ are viewed as constructed from an observed mini-batch of data and the zero centred Gaussian random variables are independent of one another.
\end{proof}
Thus, private gradient bias is caused by the clipping operation alone.
Next, we develop an objective function that provably minimises the aforementioned bias.

%After establishing per-sample gradient clipping to be the single source of bias in private SGD, we turn to characterising the bias introduced by DP-SGD in deep neural networks.

\section{A bias-aware objective}
\label{sec:BAO}
We propose $\mathcal{L}_{\mathrm{BAO}}$, an objective that, when minimised, provably reduces the bias of private gradient estimates $\hat{g}_{\mathrm{priv}}$ by encouraging small per-sample gradient norms:
\begin{equation}
\label{bias aware objective}
\mathcal{L}_{\mathrm{BAO}}(\theta, x, y) = \underbrace{\mathcal{L}(\theta, x, y)}_{\text{original loss}} + \underbrace{\lambda \left(\frac{1}{l}\sum_{i=1}^l\norm{g_i}_2\right)}_{\text{regularising term}}
\end{equation}
which can be sub-sampled as follows:
\begin{equation}
    \mathcal{L}_{\mathrm{BAO}}(\theta, x_i, y_i) = \mathcal{L}(\theta, x_i, y_i) + \lambda \norm{g_i}_2  
\end{equation}
To motivate this optimisation objective, we will now demonstrate the primary dependence of Bias($\hat{g}_{\mathrm{priv}}, \hat{g})$ on the per-sample gradient norms for a fixed clipping threshold $C$.
\begin{lemma}
\label{bias lemma}
A smaller per-sample gradient norm $\lVert g_i\rVert_2$ of the $i$-th sample in a mini-batch decreases $\text{Bias}(\hat{g}_{\mathrm{priv}}, \hat{g})$.
\end{lemma}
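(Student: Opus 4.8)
The plan is to use Lemma \ref{noise lemma} to turn the claim about $\mathrm{Bias}(\hat{g}_{\mathrm{priv}}, \hat{g})$ into a deterministic statement about the clipped gradient, and then to track exactly how each per-sample gradient enters the resulting bias vector. By Lemma \ref{noise lemma} the bias equals
\[
    \mathrm{Bias}(\hat{g}_{\mathrm{clip}}, \hat{g}) = \hat{g}_{\mathrm{clip}} - \hat{g} = \frac{1}{l}\sum_{i=1}^l\bigl(\bar{g}_i - g_i\bigr),
\]
so it suffices to understand how the summand $\bar{g}_i - g_i$ depends on $\norm{g_i}_2$.

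First I would compute the per-sample contribution explicitly by splitting on whether clipping is active. When $\norm{g_i}_2 \le C$ we have $\bar{g}_i = g_i$ and the sample contributes nothing to the bias. When $\norm{g_i}_2 > C$ we have $\bar{g}_i = (C/\norm{g_i}_2)\,g_i$, hence
\[
    \bar{g}_i - g_i = \left(\frac{C}{\norm{g_i}_2} - 1\right) g_i,
\]
a vector antiparallel to $g_i$ whose magnitude is exactly $\norm{\bar{g}_i - g_i}_2 = \norm{g_i}_2 - C$. Thus the norm of the $i$-th sample's contribution is $\max\!\bigl(0, \norm{g_i}_2 - C\bigr)$, which is non-decreasing in $\norm{g_i}_2$, strictly increasing once $\norm{g_i}_2 > C$, and drops to $0$ the moment $\norm{g_i}_2$ reaches $C$.

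Next I would pass from this per-sample quantity to the full bias magnitude via the triangle inequality:
\[
    \norm{\mathrm{Bias}(\hat{g}_{\mathrm{clip}},\hat{g})}_2 \le \frac{1}{l}\sum_{i=1}^l \max\!\bigl(0, \norm{g_i}_2 - C\bigr).
\]
The right-hand side is monotonically non-decreasing in every $\norm{g_i}_2$, so shrinking any single per-sample norm can only decrease this bound, and driving all norms below $C$ forces the bias to vanish exactly (since then $\hat{g}_{\mathrm{clip}} = \hat{g}$). This is precisely the effect targeted by the regulariser $\tfrac{\lambda}{l}\sum_i \norm{g_i}_2$ in $\mathcal{L}_{\mathrm{BAO}}$.

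I expect the main obstacle to be that the magnitude of the \emph{summed} bias vector is not unconditionally monotone in a single $\norm{g_i}_2$: because the contributions are vectors, reducing one of them can undo a cancellation with the others, so the derivative of $\norm{\mathrm{Bias}}_2^2$ with respect to $\norm{g_i}_2$ carries a cross term proportional to $\bigl\langle \sum_{j\ne i}(\bar{g}_j - g_j),\, u_i\bigr\rangle$ of indeterminate sign, where $u_i = g_i/\norm{g_i}_2$. I would resolve this either by phrasing the result through the triangle-inequality upper bound above, or by arguing that under an isotropy/independence assumption on the gradient directions this cross term vanishes in expectation, leaving $\frac{d}{d\norm{g_i}_2}\,\E\norm{\mathrm{Bias}}_2^2 \propto \bigl(\norm{g_i}_2 - C\bigr) \ge 0$ and hence genuine monotonicity on average.
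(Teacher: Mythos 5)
Your proposal is correct and follows the same skeleton as the paper's proof --- reduce to the clipping bias via Lemma \ref{noise lemma}, then case-split on whether $\norm{g_i}_2 \le C$ --- but you carry the argument further than the paper does, and in doing so you expose a real weakness in the paper's own reasoning. After writing the bias in the clipped case as $\frac{1}{l}\sum_{i=1}^l \frac{C}{\norm{g_i}_2} g_i - \hat{g}$, the paper simply asserts that the bias \say{grows} (moves away from the origin) as any $\norm{g_i}_2$ increases beyond $C$ and shrinks as norms are reduced; it never justifies this monotonicity. Your final paragraph correctly identifies that this assertion is not unconditionally true: writing the bias as $-\frac{1}{l}\sum_{i=1}^l \max\bigl(0, \norm{g_i}_2 - C\bigr)\, u_i$ with $u_i = g_i/\norm{g_i}_2$, two antiparallel per-sample contributions can cancel, so shrinking one norm can actually \emph{increase} the norm of the summed bias vector; the cross term you isolate is exactly the obstruction. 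Your triangle-inequality formulation, $\norm{\mathrm{Bias}(\hat{g}_{\mathrm{clip}}, \hat{g})}_2 \le \frac{1}{l}\sum_{i=1}^l \max\bigl(0, \norm{g_i}_2 - C\bigr)$, is the cleanest rigorous statement available: the bound is genuinely monotone in every per-sample norm, it vanishes exactly when all norms fall below $C$ (recovering the paper's unbiasedness endpoint, which is the only part the paper proves rigorously), and it is precisely the quantity the regulariser in $\mathcal{L}_{\mathrm{BAO}}$ controls. In short, the paper proves the endpoint and hand-waves the monotonicity, whereas you prove monotonicity of an upper bound and are explicit that the stronger pointwise claim suggested by the lemma's wording requires additional assumptions (such as your isotropy argument); your version should be read as the precise content of the lemma.
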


\begin{proof}
First note that
\[
\mathrm{clip}(g_i) = \frac{g_i}{\mathrm{max}(1,\frac{\lVert g_i\rVert_2)}{C})}  = 
\begin{cases} 
      g_i & \mathrm{if}\, \lVert g_i\rVert_2 \leq C \\
      \frac{Cg_i}{\lVert g_i\rVert_2} & \mathrm{if}\, \lVert g_i\rVert_2 > C.
\end{cases}
\]

\noindent Thus, if $\lVert g_i\rVert_2 \leq C$ for every $i$ in the mini-batch, then by Lemma \ref{noise lemma}:
$$\mathrm{Bias}(\hat{g}_{\mathrm{priv}}, \hat{g}) = \frac{1}{l}\sum_{i=1}^l g_i - \hat{g} = 0.$$
\noindent That is, if the $L_2$-norm of the gradients of the mini-batch are all below the clipping threshold, the bias of the private gradient estimate reduces to zero. On the other hand, if $\lVert g_i\rVert_2 > C$:

\begin{flalign}
\label{greater than thresh gpriv}
\mathrm{Bias}(\hat{g}_{\mathrm{priv}}, \hat{g}) &= \frac{1}{l}\sum_{i=1}^l\frac{C}{\lVert g_i\rVert_2}g_i - \hat{g}.
\end{flalign}

\noindent Therefore, the bias is dependent on the ratios $\frac{C}{\lVert g_i\rVert_2} \forall i$: as a per-sample gradient norm $\lVert g_i\rVert_2$ gets increasingly larger than $C$, the bias grows (note that here \say{grows} means gets further from the origin). Conversely, if we minimise $\lVert g_i\rVert_2$ for any $i$, the bias will shrink until all $\norm{g_i}_2$ in the mini-batch are less than $C$, at which point the estimate becomes unbiased. 
\end{proof}

Notice that \eqref{bias aware objective} is similar to the gradient norm penalty, used by \cite{zhao2022penalizing} to encourage flatness of the loss landscape:
\begin{equation}
    \label{zhao_objective}
    \mathcal{L}_{Z}(\theta) = \mathcal{L} (\theta) + \norm{\nabla_\theta \mathcal{L}(\theta)}_2,
\end{equation}
It can be shown that \eqref{bias aware objective} upper-bounds $\mathcal{L}_Z$ (see Appendix \ref{sec:bam_flatness}).
This means that a reduction in $\mathcal{L}_{\mathrm{BAO}}$ implies a reduction in $\mathcal{L}_Z$, but the converse is not generally true.
Note also that $\mathcal{L}_Z$ does not allow for the required per-sample DP analysis, precluding the simple application of DP-SGD \cite{park2023DPSharpness}.

%As a consequence, note that minimising \eqref{bias aware objective} thus also decreases the probability that a per-sample gradient will be clipped. This can be seen as an effective increase in the signal-to-noise ratio of the privatised gradients (assuming all $g_i$ remain equally informative). Equivalently, it is possible to reduce $\mathrm{Bias}(\hat{g}_{\mathrm{priv}}, \hat{g})$ by increasing the clipping threshold. However, this mandates a larger noise perturbation for the same privacy budget (as the effective noise magnitude in DP-SGD scales with $C$) and will thus negatively impact utility.

\begin{table*}[htbp]
    \centering
    \begin{minipage}[c]{0.9\textwidth}
        \centering
        \resizebox{0.7\linewidth}{!}{
        \begin{tabular}{llllll}
        \toprule
        \textbf{Dataset}            & $\boldsymbol{\varepsilon}$ & $\boldsymbol{\delta}$& \textbf{DP-SGD}   & \textbf{DP-SAT}   & \textbf{BAM} (ours) \\ \midrule
        \multirow{3}{*}{CIFAR-10}   &   $1.0$       &\multirow{3}{*}{$10^{-5}$} &  $60.9 \pm 0.49$ &  $60.9 \pm 0.62$   &  $\mathbf{61.4 \pm 0.48}$ \\
                                    &   $2.0$       &                           &  $67.1 \pm 0.10$ &  $67.2 \pm 0.30$   &   $\mathbf{68.2 \pm 0.27}$ \\
                                    &   $10.0$      &                           &  $78.6 \pm 0.08$ &  $78.1 \pm 0.69$   &   $\mathbf{79.7 \pm 0.13}$ \\
                                    \midrule
        \multirow{3}{*}{CIFAR-100}  &   $1.0$       &\multirow{3}{*}{$10^{-5}$} &  $18.1 \pm 0.10$ & $18.2 \pm 0.13$    &   $\mathbf{18.5 \pm 0.04}$ \\
                                    &   $2.0$       &                           &  $24.9 \pm 0.46$ & $24.9 \pm 0.35$    &   $\mathbf{25.4 \pm 0.40}$ \\
                                    &   $10.0$      &                           & $40.3 \pm 0.21$  & $40.1 \pm 0.19$    &   $\mathbf{40.8 \pm 0.06}$ \\
                                    \midrule
        ImageNet32                  &   $10.0$       &$8\times10^{-7}$ & $14.97$  &  $14.70$ &   $\mathbf{20.67}$ \\
        \bottomrule
        \end{tabular}
        }
        \vspace{4mm}
        \caption{Test accuracy (mean$\pm$SD \%) for \textit{CIFAR-10}, \textit{CIFAR-100} and \textit{ImageNet32} computed over three random seeds at different $(\varepsilon, \delta)$. Due to computational resource constraints, we report only a single training run for Imagenet32.}
        \label{table:results}
        \vfill
    \end{minipage}
    
    %\begin{minipage}[c]{0.36\linewidth}
    %    \centering
    %    \includegraphics[width=\linewidth]{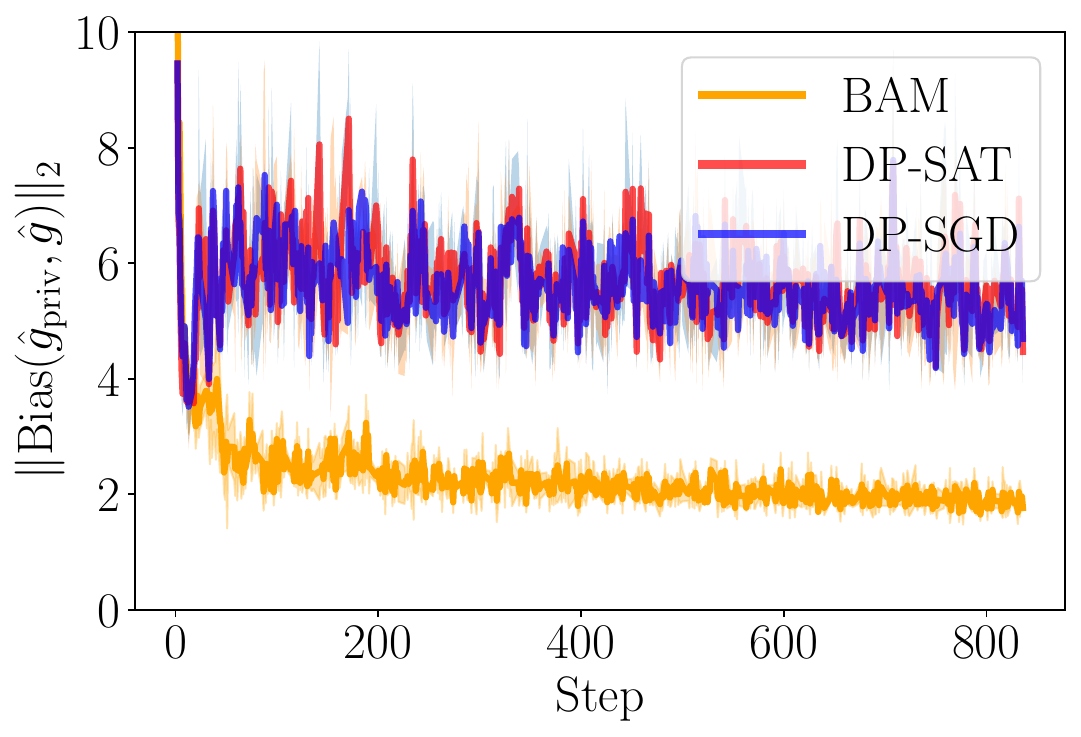}
    %    \vspace{-7mm}
    %    \captionof{figure}{Bias magnitude (mean$\pm$SD) measured on the CIFAR-10 dataset  over three repetitions at a batch-size of $512$.}
    %    \label{fig:bias}
    %\end{minipage}
    %\mbox{}\hfill
\end{table*}

\subsection{Efficient Computation}
\label{eff_compute}
Na\"ively implementing $\mathcal{L}_{\mathrm{BAO}}$ is problematic, as the gradient computation now involves computing a Hessian-Vector Product (HVP) for every sample:
\begin{align}
    \label{bao_gradient}
    \nabla_\theta \mathcal{L}_{\mathrm{BAO}}(\theta, x, y) &= \nabla_\theta \mathcal{L}(\theta, x, y) + \notag \\ 
    \lambda \frac{1}{n} \sum_{i=1}^l & \underbrace{\nabla^2_{\theta} \mathcal{L}(\theta, x_i, y_i) \frac{\nabla_\theta\mathcal{L}(\theta, x_i, y_i)}{\norm{\nabla_\theta \mathcal{L}(\theta, x_i, y_i)}_2}}_{\text{HVP}}.
\end{align}
This objective is thus expensive to compute for deep networks using reverse-mode automatic differentiation (AD) \cite{karakida2022understanding}. Fortunately, however, prior work has shown that HVPs can be computed efficiently, in this case, either exactly using a combination of forward and reverse mode AD \cite{autodiffcookbook, pearlmutter1994fast}, or approximately using the local gradient ascent step of SAM \cite{zhao2022penalizing}.

\section{Method}
To reduce the bias of the private gradient oracle in  DP-SGD, we optimise our bias-aware objective $\mathcal{L}_{\mathrm{BAO}}$ and approximate the necessary per-sample gradients.
Concretely, we perform the local gradient ascent step of SAM, at the sample level before computing the gradient:
\begin{equation}
    \nabla_\theta \mathcal{L}_{\mathrm{BAO}}(\theta, x_i, y_i) \approx \nabla_{\theta} \mathcal{L}(\theta, x_i, y_i)\bigg |_{\theta= \theta + \lambda \frac{\nabla_\theta \mathcal{L}(\theta, x_i, y_i)}{\norm{\nabla_\theta \mathcal{L}(\theta, x_i, y_i)}_2}}.
\end{equation}

The entire training procedure is summarised in Algorithm \ref{alg:db_dpsgd_algo}.

\begin{algorithm}[h]
    \caption{\textbf{Bias-Aware Minimisation} (BAM)}
    \label{alg:db_dpsgd_algo}
    \begin{algorithmic}[1]
       %\State {\bfseries Input:} Training data $\{(x_1, y_1), ..., (x_n, y_n)\}$, Loss function $\mathcal{L}(\theta, x, y)$.
       % \State {\bfseries Params:} gradient clipping bound $C$, learning rate $\mathbf{\alpha}$, mini-batch size $l$ and regularisation parameter $\lambda$
       \For{$t\in 1, 2, ..., T$}
            \State $B \gets$ Poisson sample of $D$ with probability $q$
            \For{$(x_i, y_i) \in B$}
            \State $\theta'^{(t)} \gets \theta^{(t)} + \lambda^{(t)} \frac{\nabla_{\theta}  \mathcal{L}(\theta^{(t)}, x_i, y_i)}{\norm{\nabla_\theta \mathcal{L}(\theta^{(t)}, x_i, y_i)}_2}$ \Comment{\small SAM step}
            \State $g^{(t)}_i \leftarrow \nabla_{\theta'}\mathcal{L}_{\mathrm{BAO}}(\theta'^{(t)}, x_i, y_i)$
            %\STATE $\mathbf{\hat{g}^{(t)}(x_i)} \leftarrow \nabla_{\theta}[ L(\theta^{(t)}, x_i) + \lambda^{(t)} A_c]$
            \State $\bar{g}_i^{(t)} \leftarrow g_i^{(t)} / \max \Big(1, \, \frac{\norm{g_i^{(t)}}_2}{C}\Big)$ \Comment{\small clip}
            \EndFor
            \State $\hat{g}_{\mathrm{priv}}^{(t)} = \frac{1}{l} \sum_i^l\left[{\bar{g}^{(t)}_i} +\mathcal{N}(0,\, \sigma^2C^2 I_d) \right]$ \Comment{\small perturb}
            \State $\theta^{(t+1)} = \theta^{(t)} - \gamma^{(t)} \, \hat{g}_{\mathrm{priv}}^{(t)}$
        \EndFor
    \end{algorithmic}
\end{algorithm}

Note that, since $\nabla_{\theta} \mathcal{L}_{\mathrm{BAO}}(\theta, x_i, y_i)$ only depends on sample-level statistics, the sampling process is identical to DP-SGD and all quantities are privatised as in DP-SGD, every iteration $t$ of Algorithm \ref{alg:db_dpsgd_algo} satisfies $(\varepsilon, \delta)$-DP with identical privacy parameters as DP-SGD.

\section{Results}
To evaluate our proposed approach and compare its performance to DP-SGD and DP-SAT \cite{park2023DPSharpness}, we perform a range of experiments on challenging computer vision datasets. Results for  \textit{CIFAR-10}/\textit{100} \cite{krizhevsky2009learning} and \textit{ImageNet32}\cite{chrabaszcz2017downsampled} are reported above in Table \ref{table:results}. We employ state-of-the-art (SOTA) training practices for DP-SGD \cite{de2022unlocking}, namely: weight standardisation, group normalisation, large batch sizes and augmentation multiplicity. Full experimental details and hyperparameter values can be found in Table \ref{tab:hyper}, Appendix \ref{sec:further_details}.

\begin{figure}[h]
    \centering
    \includegraphics[width=0.8\linewidth]{}
    \vspace{-2mm}
    \captionof{figure}{BAM effectively minimises bias: The magnitude of the bias vector measured on CIFAR-10 at a batch size of $512$ is substantially lower for BAM, while DP-SAT incurs the same bias as DP-SGD.
    }
    \label{fig:bias}
\end{figure}

We find that BAM effectively minimises private gradient bias in practice (see Fig. \ref{fig:bias} above). 
On the other hand, DP-SAT largely has no effect on estimator bias for values of the regularisation parameter $\lambda$ that yield high-performing models.
Further empirical run-time comparisons (data shown in Appendix \ref{sec:further_details}) reveal that while both exact and approximate gradient computations for BAM do incur higher computational costs than DP-SAT (and DP-SGD), this burden is manageable for most practically sized networks with less than 200 layers.
Finally, our empirical performance comparison reveals that when using very large batch sizes and the other SOTA practices of \citet{de2022unlocking}, performance gains on more challenging datasets realised through DP-SAT are smaller than reported in the original publication \cite{park2023DPSharpness}. 
In contrast, BAM consistently improves performance across different privacy budgets (Table \ref{table:results}).

\section{Discussion}

After deriving the bias of the private gradient oracle from first principles, we developed a bias-aware regularisation objective $\mathcal{L}_{\mathrm{BAO}}$ that was empirically confirmed effective at minimising the bias vector associated with private gradient estimation.
We further demonstrated that using the SAM approximation as suggested by \cite{zhao2022penalizing}, our objective and its gradient are computable with manageable computational overhead. 

Our method performs the local gradient ascent step (SAM step) at the per-sample level and is thus closely related to the DP-SAT method of \cite{park2023DPSharpness}, which performs the ascent step with the previous iteration's privatised mini-batch gradient.
Our experiments show that our method outperforms DP-SAT at minimising bias, which is mirrored in the superior accuracy of models trained with BAM.
Notably, our method yields a more than $5\%$ accuracy increase on the most challenging dataset tested, ImageNet32.
Based on our results, we hypothesise that, with modern private training practices \cite{de2022unlocking} (very large batch size, large learning rate), the noisy gradient of the previous iteration ($\hat{g}^{(t-1)}_{\mathrm{priv}}$), is a poor approximation for finding the maximum loss in the local neighbourhood around the current iteration's parameter value $\theta^{(t)}$.
This is corroborated by a simple experiment (data shown in Fig. \ref{fig:eff_ascent}, Appendix \ref{sec:further_details}) that shows the ascent step vector used in DP-SAT to point in slightly different directions to the current iteration's (un-privatised) mini-batch gradient conventionally used in SAM. 

Our results, while not without limitations (BAM incurs slight computational overhead), have shown that reducing private gradient bias can lead to effective performance increases on challenging image datasets. More extensive empirical evaluation, especially on large-scale datasets such as (full-size) ImageNet, is ongoing work and required to fully assess its benefits. 
Finally, future work should investigate the effectiveness of alternative methods that encourage smoothness \cite{cha2021swad} and look into connections to and impact on model fairness with respect to sub-groups \cite{tran2021differentially}.

\section{Acknowledgments}
This paper was supported by the DAAD programme Konrad Zuse Schools of Excellence in Artificial Intelligence, sponsored by the Federal Ministry of Education and Research.

\bibliography{main}

\appendix

\begin{table*}[htbp]
    \centering
    \begin{tabular}{llllllll}
        \textbf{Dataset}     & \textbf{Method}    & $\lambda$     & \textbf{Model}     & \textbf{Batch-size}    &  \textbf{LR} &  C & \textbf{Multiplicity}  \\ \toprule
        \multirow{3}{*}{CIFAR-10}   & DP-SGD    & $0.0$  & \multirow{3}{*}{ResNet-9}   &  \multirow{3}{*}{$32\times128$} & \multirow{3}{*}{2e-3}  & \multirow{3}{*}{$1$}  &    \multirow{3}{*}{16}                    \\
           & DP-SAT    & $0.086$  &    &   &   &   &           \\
           & BAM    & $0.02$  &    &   &   &   &           \\ \midrule
        \multirow{3}{*}{CIFAR-100}   & DP-SGD    & $0.0$  & \multirow{3}{*}{ResNet-9}   &  \multirow{3}{*}{$32\times128$} & \multirow{3}{*}{2e-3}  & \multirow{3}{*}{$1$}  &    \multirow{3}{*}{16}                    \\
           & DP-SAT    & $0.086$  &    &   &   &   &           \\
           & BAM    & $0.01$  &    &   &   &   &           \\ \midrule
        \multirow{3}{*}{ImageNet32}   & DP-SGD    & $0.0$  & \multirow{3}{*}{WideResNet-16-4}   &  \multirow{3}{*}{$8\times512$} & \multirow{3}{*}{1e-3}  & \multirow{3}{*}{$1$}  &    \multirow{3}{*}{4}                    \\
           & DP-SAT    & $0.07$  &    &   &   &   &           \\
           & BAM    & $0.005$  &    &   &   &   &           \\
         \bottomrule
    \end{tabular}
    \caption{Hyperparameter values for experiments on CIFAR-10, CIFAR-100 and ImageNet32.}
    \label{tab:hyper}
\end{table*}

%\begin{figure*}[htbp]
%    \centering
%    \includegraphics[width=0.8\textwidth]{example-image}
%    \caption{bias magnitude across experiments }
%\end{figure*}

\section{Experimental details \& further results}
\label{sec:further_details}
To obtain suitable hyperparameter values for the compared methods (reported above in Table \ref{tab:hyper}), a random search was first employed using DP-SGD, after which a separate search for optimal $\lambda$-values was performed for both DP-SAT and BAM with N=200 random trials each. For Imagenet32, only a very small-scale hyperparameter search was employed due to the high computational cost. Augmentation multiplicity, as described in \cite{de2022unlocking}, was employed with probability $p=0.5$ (for each augmentation) across all compared methods with random pixel shifts (up to $4$ pixels) and random vertical flips. All models were trained for $75$ epochs with the NAdam\cite{dozat2016incorporating} optimizer and, besides the learning rate (LR), otherwise, default hyperparameters. To meet the different $(\varepsilon, \delta)$ privacy budget requirements the noise multiplier $\sigma^2$ was adjusted accordingly.

\subsection{Computational aspects}
\begin{figure}[H]
    \centering
    %\begin{minipage}{0.5\linewidth}
    %    \includegraphics[width=\linewidth]{}
    %\end{minipage}
    %\hspace{-3mm}
    \begin{minipage}{0.55\linewidth}
        \includegraphics[width=\linewidth]{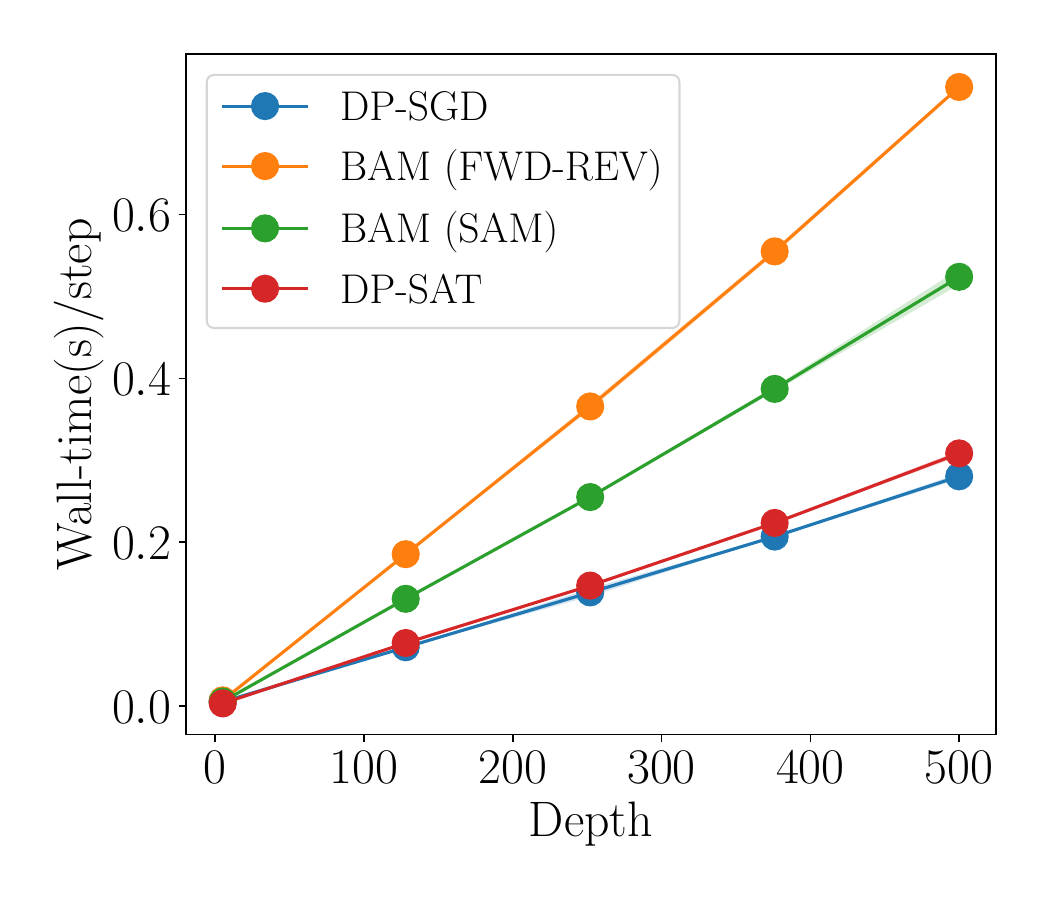}
    \end{minipage}
    \vspace{-3mm}
    \caption{Per-step wall-time (mean $\pm$ SD) for different computational approaches to bias mitigation in DP-SGD for increasing network depth. Results were computed over ten trials and five repetitions.}
    \label{fig:bench_time}
\end{figure}

We evaluate the empirical run-time complexity of the two previously mentioned approaches to compute $\nabla_\theta \mathcal{L}_{\mathrm{BAO}}(\theta, x_i, y_i)$. Concretely, we compare forward-over-reverse mode AD (BAM FWD-REV) and the SAM (BAM SAM) approximation to compute necessary gradients for BAM and compare to DP-SAT \cite{park2023DPSharpness}. DP-SAT uses the previous step's privatised mini-batch gradient to approximate \eqref{zhao_objective} at no additional privacy cost and little computational overhead. Figure \ref{fig:bench_time} showcases the results of an empirical run-time comparison on a toy dataset, implemented in \texttt{jax} \cite{jax2018github} and compiled with XLA to obtain a fair comparison between methods.

\subsection{Gradient ascent step effectiveness}
To investigate the effectiveness the gradient ascent step in DP-SAT and BAM, we investigate the cosine similarity between the ascent direction used by the respective method and the (ground-truth) ascent direction as used in SAM, that is the un-privatised, current mini-batch gradient. Our findings (below in Fig. \ref{fig:eff_ascent}) indicate that the per-sample gradient ascent step of DP-SAT is substantially better aligned with the non-private SAM ascent step than the  ascent step with the previous iteration's privatised gradient of the DP-SAT method. Concretely, we find DP-SAT ascent steps to point in slightly opposite directions, as indicated by a small, but negative cosine similarity.

\begin{figure}[h]
    \centering
    \includegraphics[width=0.7\linewidth]{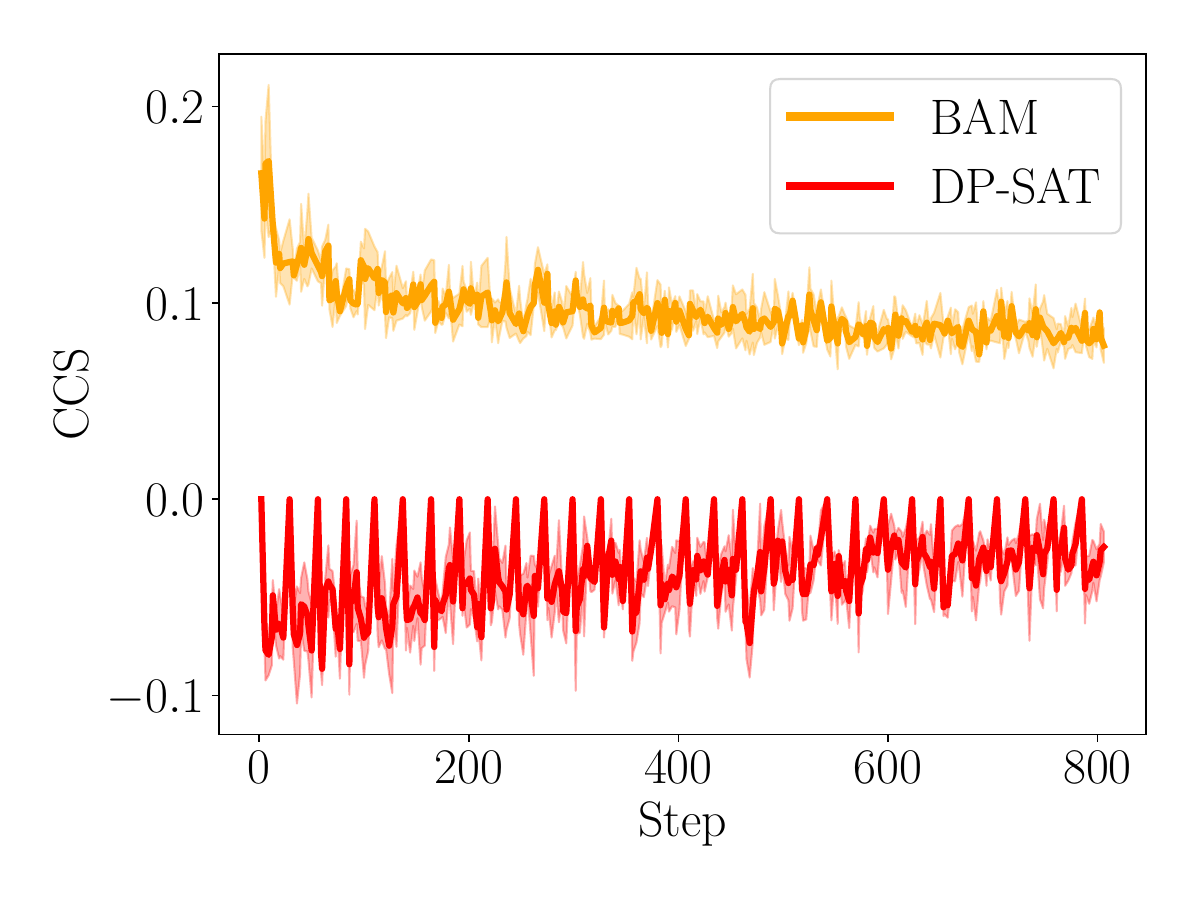}
    \vspace{-3mm}
    \caption{Effectiveness of the ascent step throughout training on CIFAR-10 (three repetitions) with DP-SAT and BAM as measured by the cosine similarity $\mathrm{cos}(\hat{g}_{\mathrm{priv}}^{(t-1)}, \hat{g}^{(t)})$ and $1/l\sum_i^l\mathrm{cos}(g_i^{(t)}, \hat{g}^{(t)})$ respectively.}
    \label{fig:eff_ascent}
\end{figure}

\subsection{C. Batch size and its effects on bias and directional error}
To investigate the empirical success of recent works using  extremely large batch sizes (or even full-batch training), we investigate the impact of batch size on private gradient bias and its directional component. 

\begin{figure}[H]
    \centering
    \includegraphics[width=0.65\linewidth]{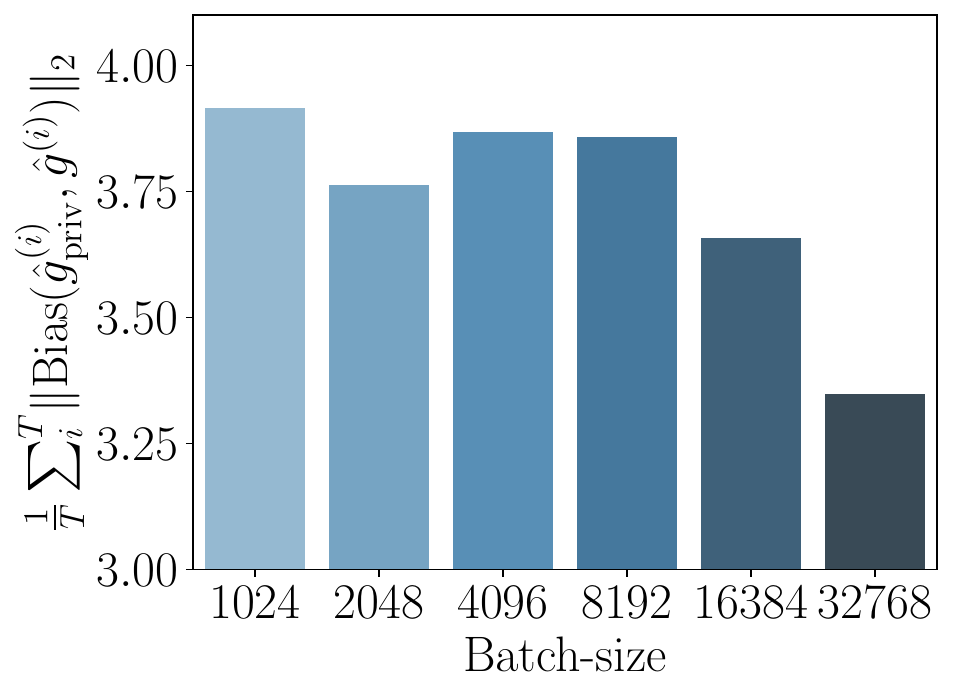}
    \caption{Bigger batches reduce private gradient bias magnitude. The figure showcases bias magnitude averaged over training iterations when training on CIFAR-10 with DP-SGD with different batch sizes.}
    \label{fig:bias_batch_size}
\end{figure}

\begin{figure}[H]
    \centering
    \includegraphics[width=0.65\linewidth]{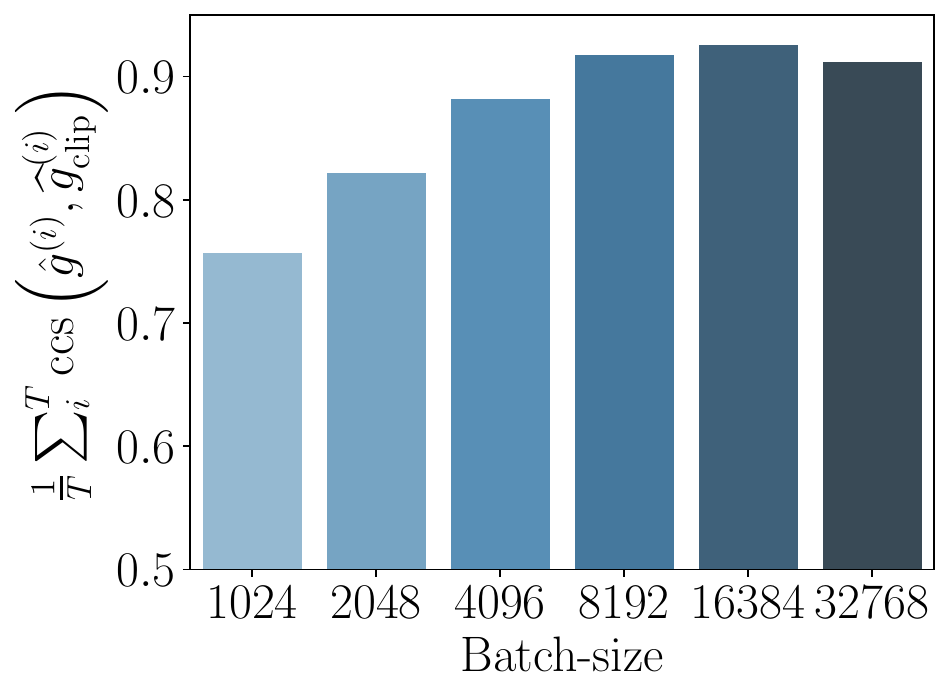}
    \caption{Bigger batches reduce the directional component of private gradient bias as measured by an increased average cosine similarity between clipped and unclipped mini-batch gradients during training on CIFAR-10.}
    \label{fig:align_batch_size}
\end{figure}

We find large batches to reduce both: the magnitude of the bias vector and the directional components of the bias vector. The latter is indicated by an increased cosine similarity between clipped and unclipped gradients.

\section{BAM and effects on loss landscape flatness}
\label{sec:bam_flatness}

Minimising \eqref{bias aware objective} has the additional benefit of smoothing the optimisation landscape around the minima (\textit{flat minima}) which, as originally argued by \citet{hochreiter1997flat}, is widely believed to offer better generalisation performance than sharp minima. To find such flat minima, \citet{zhao2022penalizing} propose the objective:

\begin{flalign}
\label{zhao 2022 objective}
\mathcal{L}_Z(\theta) &= \mathcal{L}(\theta) + \lambda \big\lVert \nabla_\theta \mathcal{L}(\theta)\big\rVert_2 \\
&= \mathcal{L}(\theta) + \lambda \Big\lVert \frac{1}{n}\sum_{i=1}^n g_i\Big\rVert_2
\end{flalign}

This objective finds flat minima because $\lVert \nabla_\theta \mathcal{L}(\theta)\rVert_2$ approximates the Lipschitz constant of the loss function $\mathcal{L}$. The Lipschitz constant is an upper bound on how much the magnitude of a change in the parameter space changes the magnitude of the loss. This is a way of capturing \say{flatness}: the smaller the Lipschitz constant of the loss function, the flatter the minima (since the same change in $\theta$ would lead to a smaller bound on the change in the loss magnitude). Hence, by decreasing the Lipschitz constant of the loss function, the solution will be driven towards a flatter minimum. 

However, objective (\ref{zhao 2022 objective}) cannot be used in our case because it does not allow for the per-sample analysis required for DP. Nonetheless, we can show the following:

\begin{lemma}
The bias aware objective  $\mathcal{L}_{\mathrm{BAO}}$ (\ref{bias aware objective}) upper-bounds the objective $\mathcal{L}_Z$ (\ref{zhao 2022 objective}) of \cite{zhao2022penalizing}.
\end{lemma}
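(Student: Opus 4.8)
The plan is to notice that the two objectives differ only in their regularising terms: both $\mathcal{L}_{\mathrm{BAO}}$ and $\mathcal{L}_Z$ share the same original loss $\mathcal{L}(\theta)$ and the same nonnegative constant $\lambda$. Writing $g_i = \nabla_\theta \mathcal{L}(\theta, x_i, y_i)$, it therefore suffices to compare the two penalties pointwise in $\theta$, i.e. to show that the average of the per-sample norms dominates the norm of the average, $\frac{1}{l}\sum_{i=1}^l \norm{g_i}_2 \;\geq\; \norm{\frac{1}{l}\sum_{i=1}^l g_i}_2$. Everything reduces to this single scalar inequality between the two regularisers.

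First I would pull the scalar $1/l$ out of the Euclidean norm in the $\mathcal{L}_Z$ penalty, writing $\norm{\frac{1}{l}\sum_i g_i}_2 = \frac{1}{l}\norm{\sum_i g_i}_2$, and then apply the triangle inequality (subadditivity of the norm), $\norm{\sum_{i=1}^l g_i}_2 \leq \sum_{i=1}^l \norm{g_i}_2$. Dividing through by $l$ gives exactly the desired bound on the regularisers. Since $\lambda \geq 0$, multiplying by $\lambda$ preserves the inequality's direction, and adding the common term $\mathcal{L}(\theta)$ to both sides yields $\mathcal{L}_{\mathrm{BAO}}(\theta) \geq \mathcal{L}_Z(\theta)$ for every $\theta$, which is the claim. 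The one bookkeeping point worth flagging is the index range: $\mathcal{L}_Z$ is written as a mean over the full dataset ($n$ terms) while the BAO penalty is a mean over a mini-batch ($l$ terms), so I would state the comparison over whichever common sample set both empirical means are taken over, ensuring the normalising constants cancel cleanly.

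I do not expect a substantive obstacle here: the statement is essentially a one-line consequence of the triangle inequality, so the only things to get right are making the comparison hold pointwise in $\theta$ and matching the averaging constants and index sets. It is worth remarking that the inequality is tight precisely when the per-sample gradients $g_i$ are nonnegatively aligned (all pointing in a common direction), which is exactly the regime in which clipping induces no directional bias; this ties the bound back to the mechanism identified in Lemma \ref{bias lemma} and clarifies why minimising $\mathcal{L}_{\mathrm{BAO}}$ is a strictly stronger demand than minimising $\mathcal{L}_Z$.
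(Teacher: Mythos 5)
Your proof is correct and is essentially identical to the paper's: both pull the averaging constant out of the norm and apply the triangle inequality $\lVert\sum_i g_i\rVert_2 \leq \sum_i \lVert g_i\rVert_2$, then reattach $\lambda$ and the common loss term. Your bookkeeping note on the $n$ versus $l$ index range and the tightness remark are sensible additions, but the core argument matches the paper exactly.
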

\begin{proof}
This follows from the triangle inequality:

\begin{align}
\label{inequal flatness}
\mathcal{L}(\theta) + \lambda \Big\lVert\frac{1}{n}\sum_{i=1}^ng_i\Big\rVert_2 &= \mathcal{L}(\theta) + \lambda\frac{1}{n} \Big\lVert\sum_{i=1}^n g_i\Big\rVert_2 \\
&\leq \mathcal{L}(\theta) + \lambda \frac{1}{n}\sum_{i=1}^n\lVert g_i\rVert_2. 
\end{align}
\end{proof}

\noindent Thus, minimising $\mathcal{L}_{\mathrm{BAO}}$ will additionally drive the solution to a flat minimum by minimising an upper bound on (\ref{zhao 2022 objective}).

\section{A meaningful decomposition of the private gradient}
\label{sec:decomp}

\begin{figure}[H]
    \centering
    \hspace{5mm}
    \begin{minipage}{0.4\linewidth}
        \usetikzlibrary{angles,quotes}
\usetikzlibrary{arrows.meta,decorations.pathreplacing}

\tikzset{
dot/.style = {circle, fill, minimum size=#1,
              inner sep=0pt, outer sep=0pt},
dot/.default = 6pt % size of the circle diameter 
}

\begin{tikzpicture}

    % averaged batch gradients
    \draw[line width=4pt, my_lightblue, -stealth, opacity=1.0](0,0)--(-0.7, 1.1) node[anchor=north east]{};
    \draw[line width=4pt, my_blue,-stealth, opacity=1.0](0,0)--(0.4, 2.25) node[anchor= north west]{};

    % Bias(\hat{g}_priv, \hat{g})
    \draw[line width=4pt, my_red,-stealth](0.4, 2.25)--(-0.7, 1.1) node[anchor=south]{};
    
    % curly bracket with tilted label
    \node[midway, xshift=-4mm,yshift=20mm]{\rotatebox{46}{\textcolor{my_red}{$\boldsymbol{\text{Bias}(\hat{g}_\text{priv}, \hat{g},)}$}}};
    
    % center dot
    \node[dot=6pt, opacity=1] at (0,0) {};

    %label
    \node[text width=1cm] at (0.35,-0.5) 
    {(a)};
\end{tikzpicture}
    \end{minipage}
    \hspace{5mm}
    \begin{minipage}{0.4\linewidth}
        \usetikzlibrary{angles,quotes}

\tikzset{
dot/.style = {circle, fill, minimum size=#1,
              inner sep=0pt, outer sep=0pt},
dot/.default = 6pt % size of the circle diameter 
}

\begin{tikzpicture}
 
    % averaged batch gradients
   \draw[line width=4pt, gray, -stealth, opacity=0.15](0,0)--(0.4, 2.25) node[anchor= south]{};
   \draw[line width=4pt, gray, -stealth, opacity=0.15](0,0)--(-0.7, 1.1) node[anchor=south]{};

    % dotted bias line 
    %\draw[line width=2pt, black, loosely dotted](0.4, 2.25)--(-0.7, 1.1) node[anchor=south]{};
    % Bias(\hat{g}_priv, \hat{g})
    \draw[line width=4pt, my_red, -stealth](0.4, 2.25)--(-0.7, 1.1) node[anchor=south]{};
    
    % scaling by a: a*\hat{g}
    \draw[line width=4pt, my_orange,-stealth](0.4, 2.25)--(0.18, 0.97) node[anchor=west, yshift=7mm]{$\boldsymbol{(a-1)\hat{g}}$};

    % shifting by c:
    \draw[line width=4pt, purple!70!black, -stealth] (0.18, 0.97)--(-0.7, 1.1) node[anchor=south, xshift=5.5mm, yshift=-7mm]{$c$};

    % center dot
    \node[dot=6pt, opacity=1.0] at (0,0) {};
    
    % figure label
    \node[text width=1cm] at (0.3,-0.5){(b)};
 \end{tikzpicture}
    \end{minipage}
    \caption{Illustration of the proposed bias (a) decomposition into orthogonal vector components (b): magnitude \textcolor{my_orange}{$\boldsymbol{a}$} and direction \textcolor{purple!70!black}{$\boldsymbol{c}$}.}
    \label{fig:my_label}
\end{figure}

With the aim of better capturing the pathological nature of clipping during private stochastic optimisation, we propose a bias decomposition into orthogonal components that allow for the isolation of \textit{magnitude} and directional \textit{estimation} error.

In our case, we have an estimator $\hat{g}_{\mathrm{priv}}$ of $\hat{g}$. We now show a decomposition of $\hat{g}_{\mathrm{priv}}$ in terms of $\hat{g}$, which will allow us to differentiate harmful from harmless bias.

\begin{theorem}
\label{decomp theorem}
The private gradient estimate $\hat{g}_{\mathrm{priv}}$ can be decomposed as:

$$\hat{g}_{\mathrm{priv}} = a\cdot \hat{g} + c,$$

\noindent where $a \in \mathbb{R}, c \in \mathbb{R}^d$. We call $a$ the \textbf{magnitude error} and $c$ the \textbf{directional error} that arise through (private) estimation.
\end{theorem}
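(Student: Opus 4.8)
The plan is to realise the statement as the elementary orthogonal projection of $\hat{g}_{\mathrm{priv}}$ onto the line spanned by $\hat{g}$, together with the orthogonal residual. As literally written, $\hat{g}_{\mathrm{priv}} = a\hat{g} + c$ carries no content for arbitrary $a \in \mathbb{R}$ and $c \in \mathbb{R}^d$, since one may fix any scalar $a$ and set $c = \hat{g}_{\mathrm{priv}} - a\hat{g}$. The meaningful decomposition, the one drawn in Figure \ref{fig:my_label}(b), additionally requires $c$ to be \emph{orthogonal} to $\hat{g}$, so that $a$ measures pure rescaling along the true descent direction and $c$ captures the component pointing away from it. I would therefore state the orthogonality condition $\langle c, \hat{g}\rangle = 0$ explicitly and establish existence and uniqueness under it.

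Assuming $\hat{g} \neq 0$, I would define the magnitude error as the projection coefficient
\begin{equation*}
a = \frac{\langle \hat{g}_{\mathrm{priv}}, \hat{g}\rangle}{\norm{\hat{g}}_2^2}
\end{equation*}
and the directional error as the residual $c = \hat{g}_{\mathrm{priv}} - a\hat{g}$. The identity $\hat{g}_{\mathrm{priv}} = a\hat{g} + c$ then holds by construction with $a \in \mathbb{R}$ and $c \in \mathbb{R}^d$. A one-line computation verifies orthogonality,
\begin{equation*}
\langle c, \hat{g}\rangle = \langle \hat{g}_{\mathrm{priv}}, \hat{g}\rangle - a\norm{\hat{g}}_2^2 = 0,
\end{equation*}
showing that $a\hat{g}$ is the orthogonal projection of $\hat{g}_{\mathrm{priv}}$ onto $\mathrm{span}(\hat{g})$ and that $c$ lies in its orthogonal complement. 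Uniqueness follows in the standard way: if $a'\hat{g} + c' = a\hat{g} + c$ with $c, c' \perp \hat{g}$, then pairing both sides with $\hat{g}$ forces $a' = a$, hence $c' = c$.

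There is no substantial obstacle here; the only case requiring comment is the degenerate one $\hat{g} = 0$ (a batch on which SGD would take no step), where $a$ is undefined and the spanned subspace is trivial. In that case the decomposition still holds with $c = \hat{g}_{\mathrm{priv}}$ and $a$ arbitrary, i.e. the bias is purely directional. To connect back to the earlier analysis, I would finally note that the bias vector studied in Section \ref{section:clipping_bad} inherits the same splitting, $\mathrm{Bias}(\hat{g}_{\mathrm{priv}}, \hat{g}) = \hat{g}_{\mathrm{priv}} - \hat{g} = (a-1)\hat{g} + c$, so that $(a-1)\hat{g}$ isolates the harmless rescaling of the correct direction while $c$ isolates the harmful deviation away from it, exactly as labelled in the figure.
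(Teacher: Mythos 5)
Your proof is correct, and your observation that the statement as literally written is vacuous (fix any $a$ and set $c = \hat{g}_{\mathrm{priv}} - a\hat{g}$) is accurate --- but your route is genuinely different from the paper's. You take the finished vector $\hat{g}_{\mathrm{priv}}$, project it onto $\mathrm{span}(\hat{g})$, and define $c$ as the orthogonal residual; this buys exact orthogonality $\langle c, \hat{g}\rangle = 0$, uniqueness, and a clean treatment of the degenerate case $\hat{g} = 0$. The paper instead constructs the decomposition from the inside out: each per-sample gradient is split as $g_i = \eta_i \hat{g} + \tau_i$ with $\eta_i = \langle g_i, \hat{g}\rangle / \norm{\hat{g}}_2^2$ and $\tau_i \perp \hat{g}$, this splitting is carried through the clipping operation, $\mathrm{clip}(g_i) = (\eta_i/M_i)\,\hat{g} + \tau_i/M_i$ with $M_i$ the per-sample clipping factor, then averaged over the batch, and the Gaussian perturbations $\beta_i$ are appended, yielding $a = \frac{1}{l}\sum_{i=1}^l \eta_i/M_i$ and $c = \frac{1}{l}\sum_{i=1}^l \left(\tau_i/M_i + \beta_i\right)$. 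The paper's construction buys exactly what your canonical projection discards: its $a$ is an explicit function of the per-sample alignments $\eta_i$ and clipping factors $M_i$, which is what ties the magnitude error back to the per-sample gradient norms (the central thesis of the paper), and all randomness is confined to $c$, so $a$ is deterministic given the batch --- whereas your $a$ is a random variable depending on the noise realisation and reveals nothing about how clipping produced the error. The trade-off runs the other way on orthogonality: the paper's $c$ is \emph{not} exactly orthogonal to $\hat{g}$, since the isotropic noise $\beta_i$ almost surely has a nonzero component along $\hat{g}$; orthogonality holds only for the clipped part (equivalently, for $\E[c]$), so the picture of orthogonal components in Figure \ref{fig:my_label} is exact in your version but holds only in expectation in the paper's. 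Both arguments establish Theorem \ref{decomp theorem}; yours is the sharper linear-algebra statement, the paper's is the one that makes the decomposition interpretable in terms of the clipping mechanism it is meant to analyse.
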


\begin{proof}
We first relate each per-sample gradient $g_i$ to the mini-batch gradient $\hat{g}$ through a vector decomposition into orthogonal components.
$$g_i = \mathrm{proj}_{\hat{g}}\,g_i + \tau_i$$
\noindent where $\mathrm{proj}_{\hat{g}}\,g_i = \frac{\langle g_i, \hat{g}\rangle}{\lVert \hat{g}\rVert^2} \cdot \hat{g}$ and  $\tau_i$ is a suitable vector orthogonal to the projection. Letting $\eta_i = \frac{\langle g_i, \hat{g}\rangle}{\lVert \hat{g}\rVert^2}$, the decomposition is:
\begin{equation}
    \mathrm{proj}_{\hat{g}}\,g_i = \eta_i \cdot \hat{g} + \tau_i.
\end{equation}

Utilising this decomposition inside the $\mathrm{clip}$ function gives us:
\begin{equation}
\mathrm{clip}(g_i) = \frac{g_i}{\max(1, \frac{C}{||\hat{g}_i||_2})} = \frac{\eta_i \hat{g} + \tau_i}{\max(1, \frac{C}{||g_i||_2})}.
\end{equation}
Now, let $M_i = \max(1, \frac{C}{||g_i||_2})$ so that we can simplify the above as
\begin{equation}
\mathrm{clip}(g_i) = \frac{\eta_i}{M_i} \hat{g} + \frac{\tau_i}{M_i}.
\end{equation}
Averaging the clipped gradients across the mini-batch yields:
\begin{align}
\hat{g}_{\mathrm{clip}} &= \frac{1}{l}\sum_{i=1}^l \mathrm{clip}(g_i) \\
&=\frac{1}{l}\sum_{i=1}^l\left(\frac{\eta_i}{M_i} \hat{g} + \frac{\tau_i}{M_i}\right) \\
 &= \left(\frac{1}{l}\sum_{i=1}^l\frac{\eta_i}{M_i}\right)\; \cdot \; \hat{g} + \left(\frac{1}{l}\sum_{i=1}^l\frac{\tau_i}{M_i}\right).
\end{align}
Finally, perturbing each $g_i$ with a vector $\beta_i \sim \mathcal{N}(0, C\sigma I_d)$ we obtain the desired decomposition in terms of magnitude error $a$ and directional error $c$:
\begin{align}
\hat{g}_{\mathrm{priv}} &= \frac{1}{l}\sum_{i=1}^l\left(\mathrm{clip}(g_i) + \beta_i\right) \\
&= \underbrace{\left(\frac{1}{l}\sum_{i=1}^l\frac{\eta_i}{M_i}\right)}_{a}\; \cdot \; \hat{g} \; + \; \underbrace{\left(\frac{1}{l}\sum_{i=1}^l\frac{\tau_i}{M_i}+\beta_i \right)}_{c}
\end{align}
\vspace{-4mm}
\end{proof}

This allows us to now effectively isolate directional from magnitude components of the private gradient bias:

\begin{align}
    \mathrm{Bias}(\hat{g}_{\mathrm{priv}}, \hat{g}) &= \E \left[ \hat{g}_{\mathrm{priv}} \right] - \hat{g} \\
    &= \E \left[ a \cdot \hat{g} + c\right] - \hat{g}.
\end{align}

%\subsection{Further Computational Results}

%\begin{figure}[h]
%    \centering
%    \includegraphics[width=0.7\linewidth]{figs/BAO_bench_width.pdf}
%    \caption{The impact of increasing layer width on time (s) per optimisation step for vanilla DP-SGD, exact BAM using forward-over-reverse mode AD, approximate BAM using SAM and DP-SAT.}
%    \label{fig:width_scaling}
%\end{figure}

\end{document}